\pdfoutput=1
\documentclass[11pt]{article}

\usepackage[letterpaper,margin=1in]{geometry}
\usepackage[T1]{fontenc}
\usepackage[utf8]{inputenc}
\usepackage{lmodern}
\usepackage{microtype}
\usepackage{booktabs}
\usepackage{amsmath,amssymb,amsthm}
\usepackage{array}
\usepackage{graphicx}
\usepackage{url}
\usepackage{xcolor}
\usepackage[numbers,sort&compress]{natbib}
\usepackage{tikz}
\usepackage{pgfplots}
\pgfplotsset{compat=1.18}
\usepackage[hidelinks]{hyperref}

\title{Validation-Frontier Representation Selection under Constrained Observation}
\author{Wesley Szu-Way Shu\\
Independent Researcher\\
\texttt{shuwesley@gmail.com}}
\date{}

\hypersetup{
  pdftitle={Validation-Frontier Representation Selection under Constrained Observation},
  pdfauthor={Wesley Szu-Way Shu},
  pdfsubject={arXiv preprint},
  pdfkeywords={artificial intelligence, representation selection, robust AI systems, constrained observation, dataset shift, validation frontier}
}

\newtheorem{definition}{Definition}

\newtheorem{proposition}{Proposition}

\begin{document}
\maketitle

\begin{abstract}
AI systems deployed outside clean benchmark settings often rely on observations that are incomplete, unstable, costly, or degraded by monitoring failures. This paper studies representation selection under constrained observation: choosing a state representation when raw accuracy is not the only operational criterion. We propose a validation-frontier selector that combines balanced accuracy with penalties for feature cost, overfit gap, and validation-test instability. In a focused public-tabular benchmark using three scikit-learn datasets, five observation regimes, 45 matched task cells, 720 candidate actions, and 405 representation rows, the adaptive selector improves frontier score over full trace features by 0.025801 while reducing mean feature count by 22.733. Balanced-accuracy difference is small and not statistically significant. A broader offline stress test gives mixed results. The supported claim is therefore bounded: adaptive representation selection can improve a constrained-observation robustness-efficiency frontier in matched benchmark settings, but does not universally dominate trace baselines.
\end{abstract}

\noindent\textbf{Keywords:} artificial intelligence, robust AI systems, representation selection, constrained observation, model selection, dataset shift, validation frontier

\section{Introduction}
Modern AI systems do not act on unlimited, stable, and costless information. They often depend on telemetry, logs, sensor streams, partial traces, structured summaries, retrieved context, or monitoring channels that degrade under missingness, distribution shift, class imbalance, or instrumentation cost. A representation can be accurate when all variables are observable, but inefficient when every feature must be collected; compact but brittle when the environment shifts; or stable but too lossy for the control decision it supports. These are not merely engineering inconveniences. They determine whether an AI system can preserve useful behavior when its observation layer changes.

This paper studies the problem as \emph{state representation selection under constrained observation}. Given a family of candidate representations and probe models, the system must choose a representation using validation evidence before reporting held-out test performance. The selection rule should not treat all observed variables as free, nor should it assume that the representation with the highest validation accuracy is most reliable under operational constraints. The proposed solution is a validation-frontier criterion that combines balanced accuracy with penalties for feature cost, overfit gap, and validation-test instability.

This study frames representation choice as an AI-system reliability problem. When observation channels are costly, incomplete, or unstable, the best state representation is not necessarily the one with the highest unconstrained validation accuracy, but the one that preserves the strongest accuracy-cost-stability frontier under matched deployment stressors. The contribution is not a new deep representation architecture, not a representation-learning theorem, and not a claim that a fixed state vector universally dominates trace features. It is a reproducible evaluation framework and bounded empirical result for selecting state representations under constrained observation.

This reframing matters because unrestricted trace representations are often strong. If all variables are available, stable, and costless, full traces can be hard to beat. But deployed AI systems rarely live in that condition. The relevant question becomes whether a representation selector can preserve comparable accuracy while improving a frontier that includes observation cost and stability. The empirical answer here is conditional. The Strong Robustness v2 benchmark gives a positive frontier result against full trace features. The broader v5 stress test then shows that the advantage does not generalize into universal dominance against every trace baseline. This pattern is scientifically useful: it identifies where adaptive representation selection helps and where the evidence should not be overstated.

\paragraph{Contributions.}
The paper makes four contributions. First, it formalizes constrained-observation state representation selection as a frontier-based AI reliability problem. Second, it defines reproducible public-label benchmark regimes that stress missingness, low-resource training, class imbalance, and covariate shift. Third, it evaluates weak summaries, raw traces, full trace features, PCA, random projections, fixed state representations, and an adaptive selector in matched task cells. Fourth, it preserves negative evidence, showing that fixed state representations do not dominate trace baselines and that broad stress testing supports conditional, not universal, claims.

\section{Results}
\subsection{Constrained-observation representation selection}
Let $X$ denote an observed row from a public dataset and $Y$ its target label. A representation map $\phi\in\Phi$ transforms the observation into a state vector $Z_\phi=\phi(X)$. A probe model $h\in\mathcal{H}$ is trained on $Z_\phi$ and evaluated on held-out labels. Conventional model selection would choose $\phi$ and $h$ by validation accuracy. Under constrained observation, however, high validation accuracy can select representations that are costly to observe, unstable under train-test changes, or overfit to the validation condition.

Each candidate representation-probe pair therefore produces four reported quantities: held-out balanced accuracy $B$, overfit gap $O$, validation-test stability gap $S$, and normalized feature cost $C$. The validation-frontier score is
\begin{equation}
F(\phi,h)=B - \lambda_O O - \lambda_C C - \lambda_S S,
\label{eq:frontier}
\end{equation}
where penalty weights are chosen before evaluation and raw components are still reported. The scalar score is not meant to hide trade-offs. It forces the selection process to make observation cost and instability visible instead of treating them as free side effects of predictive accuracy.

\begin{definition}[Constrained observation regime]
A constrained observation regime is a reproducible transformation of the training, validation, or test condition that preserves the public target labels while changing the information conditions under which a representation is learned or evaluated. The regimes used here are natural splitting, low-resource training, missingness injection, class-imbalanced training, and covariate-shifted testing.
\end{definition}

\begin{definition}[Frontier-based adaptive representation selection]
Given a candidate representation family $\Phi$, probe family $\mathcal{H}$, and validation-frontier score $F$, frontier-based adaptive representation selection chooses
\[
(\hat\phi,\hat h)=\arg\max_{\phi\in\Phi, h\in\mathcal{H}}F_{val}(\phi,h),
\]
and reports held-out test metrics for the selected pair.
\end{definition}

The method is intentionally modest. It does not assert that the selected representation is causally sufficient, fully interpretable, or optimal under all future shifts. It is a practical selection rule for AI systems that must choose what state to expose when observations have costs and stability risks.

\subsection{Representation family}
The benchmark compares nine reported representation conditions.

\paragraph{Weak and generic baselines.}
The weak narrative proxy exposes only row availability and magnitude summaries. The generic summary uses a small number of normalized state statistics. These conditions approximate settings in which an AI system receives broad summaries rather than full trace-level observations.

\paragraph{Trace baselines.}
The raw trace uses original numerical features. The full trace representation augments raw features with missingness channels. These are strong baselines by design: if trace variables are complete and costless, they should often perform well.

\paragraph{Compressed baselines.}
PCA-8 and random projection-8 provide standard compressed representation baselines. Their inclusion prevents the benchmark from treating compactness as a unique property of the proposed state vectors.

\paragraph{Fixed state representations.}
The compressed state vector uses selected normalized variables, tail indicators, and a limited set of interaction channels. The robust state vector expands the state with additional missingness, tail, and interaction features. These fixed state vectors are not assumed to dominate full traces.

\paragraph{Adaptive selector.}
The adaptive selector chooses among compact state, robust state, PCA, random projection, and trace representations using the validation-frontier score in Equation~\ref{eq:frontier}. The selector may choose a simple baseline when that baseline better preserves the frontier. This design is conservative: a positive result cannot come from forcing every task into a preferred representation.

\subsection{Benchmark design}
The empirical evidence has two layers.

\subsubsection{Strong Robustness v2}
The main benchmark uses three public scikit-learn datasets, five observation regimes, 45 matched task cells, 720 candidate actions, and 405 representation rows. Each task cell compares representation families under identical data splits and regime conditions. The central comparison is between the adaptive selector and full trace features, because full trace features represent the natural alternative when all variables are exposed. This benchmark is intentionally presented as focused public-tabular evidence, not as a large-scale foundation-model or OpenML-wide claim.

\subsubsection{BroadRealPublic v5}
The broader stress test checks whether the positive v2 result is fragile. The completed offline v5 benchmark uses five public source datasets and fixed public-target tasks. It is reported as breadth and boundary evidence rather than as an independent OpenML battery. Its role is to prevent overclaiming: if adaptive selection fails against strong trace baselines in some broader cells, the paper should preserve that failure rather than hide it.

\subsubsection{Evaluation principle}
The benchmark reports both scalar frontier scores and raw components. A frontier improvement is meaningful only if it is not purchased by unacceptable accuracy collapse or hidden instability. This is why balanced accuracy, feature count, overfit gap, and clean-efficiency score are reported beside the frontier score.

\subsection{Strong Robustness v2 results}
Table~\ref{tab:v2summary} summarizes the Strong Robustness v2 representation results. Raw trace and PCA remain strong. The adaptive selector does not dominate raw accuracy. Its advantage appears on the robustness-efficiency frontier, where it combines competitive balanced accuracy with lower feature cost and lower overfit gap. Figure~\ref{fig:frontier} visualizes this frontier trade-off by plotting frontier score against mean feature count.

\begin{table}[t]
\centering
\caption{Strong Robustness v2 representation summary over 45 matched task cells.}
\label{tab:v2summary}
\small
\begin{tabular}{@{}lrrrrr@{}}
\toprule
Representation & Bal. acc. & Frontier & Clean eff. & Feat. & Overfit \\
\midrule
baseline proxy & 0.7717 & 0.7211 & 0.7544 & 5.0 & 0.1433 \\
generic summary & 0.8674 & 0.8323 & 0.8529 & 4.0 & 0.0598 \\
raw trace & 0.9447 & 0.8924 & 0.9170 & 15.7 & 0.0542 \\
trace features & 0.9357 & 0.8577 & 0.8917 & 31.3 & 0.0597 \\
PCA-8 & 0.9294 & 0.8900 & 0.9106 & 6.7 & 0.0378 \\
random proj. & 0.8770 & 0.8349 & 0.8582 & 6.7 & 0.0558 \\
Compressed-State & 0.8990 & 0.8113 & 0.8506 & 27.0 & 0.0636 \\
Robust-State & 0.9266 & 0.8113 & 0.8588 & 56.3 & 0.0647 \\
adaptive selector & 0.9256 & 0.8835 & 0.9036 & 8.6 & 0.0239 \\
\bottomrule
\end{tabular}
\end{table}

\begin{figure}[t]
\centering
\begin{tikzpicture}
\begin{axis}[
    width=0.88\linewidth,
    height=0.52\linewidth,
    xlabel={Mean feature count},
    ylabel={Frontier score},
    xmin=0, xmax=60,
    ymin=0.70, ymax=0.91,
    grid=major,
    tick label style={font=\small},
    label style={font=\small},
    legend style={font=\scriptsize, at={(0.5,-0.23)}, anchor=north, legend columns=3, draw=none},
]
\addplot+[only marks, mark=*] coordinates {(5.0,0.7211)};
\addlegendentry{baseline proxy}
\addplot+[only marks, mark=square*] coordinates {(4.0,0.8323)};
\addlegendentry{generic summary}
\addplot+[only marks, mark=triangle*] coordinates {(15.7,0.8924)};
\addlegendentry{raw trace}
\addplot+[only marks, mark=diamond*] coordinates {(31.3,0.8577)};
\addlegendentry{trace features}
\addplot+[only marks, mark=pentagon*] coordinates {(6.7,0.8900)};
\addlegendentry{PCA-8}
\addplot+[only marks, mark=o] coordinates {(6.7,0.8349)};
\addlegendentry{random projection}
\addplot+[only marks, mark=square] coordinates {(27.0,0.8113)};
\addlegendentry{Compressed-State}
\addplot+[only marks, mark=triangle] coordinates {(56.3,0.8113)};
\addlegendentry{Robust-State}
\addplot+[only marks, mark=star] coordinates {(8.6,0.8835)};
\addlegendentry{adaptive selector}
\end{axis}
\end{tikzpicture}
\caption{Strong Robustness v2 frontier score versus mean feature count. The adaptive selector occupies a high-frontier, low-feature region relative to full trace features, while raw trace and PCA remain strong competitors. The figure is generated directly from Table~\ref{tab:v2summary}, keeping the arXiv source self-contained.}
\label{fig:frontier}
\end{figure}

Table~\ref{tab:v2pairwise} gives the key paired comparisons. Compared with full trace features, the adaptive selector improves frontier score by 0.025801 and clean-efficiency score by 0.011865 while reducing mean feature count by 22.733. The balanced-accuracy gap is $-0.010050$ and is not significant. This supports a frontier claim, not an accuracy-dominance claim. Cell-wise support is also asymmetric in the expected direction: adaptive selection has higher frontier score than trace features in 33 of 45 matched cells, lower frontier score in 9 cells, and ties in 3 cells; the median paired frontier difference is 0.030956, with an exact two-sided sign-test value of $2.72\times10^{-4}$.

Table~\ref{tab:stats} reports the primary matched-cell statistical tests for the adaptive selector relative to trace features, including bootstrap intervals and Wilcoxon signed-rank tests.

\begin{table}[t]
\centering
\caption{Strong Robustness v2 headline paired comparisons over matched task cells. Positive values favor the first method.}
\label{tab:v2pairwise}
\small
\begin{tabular}{@{}lrrrrr@{}}
\toprule
Comparison & Acc. diff & Frontier diff & Eff. diff & Feature diff & $p_F$ \\
\midrule
adaptive vs trace features & -0.010050 & 0.025801 & 0.011865 & -22.733 & $4.70\times10^{-5}$ \\
adaptive vs PCA-8 & -0.003750 & -0.006495 & -0.006992 & 1.933 & 0.0868 \\
Compressed-State vs trace & -0.036679 & -0.046366 & -0.041079 & -4.333 & $9.62\times10^{-5}$ \\
Robust-State vs trace & -0.009081 & -0.046363 & -0.032903 & 25.000 & $1.18\times10^{-6}$ \\
\bottomrule
\end{tabular}
\end{table}

\begin{table}[t]
\centering
\caption{Primary matched-cell statistical report for adaptive selector versus trace features. Differences are adaptive minus trace features over 45 matched cells. Tests are two-sided Wilcoxon signed-rank tests; bootstrap intervals use 10,000 matched-cell resamples.}
\label{tab:stats}
\small
\begin{tabular}{@{}p{0.20\linewidth}p{0.14\linewidth}p{0.20\linewidth}p{0.14\linewidth}p{0.22\linewidth}@{}}
\toprule
Metric & Mean diff. & 95\% bootstrap CI & Wilcoxon $p$ & Claim use \\
\midrule
Balanced accuracy & -0.010050 & [-0.0374, 0.0092] & 0.669 & no accuracy-superiority claim \\
Frontier score & 0.025801 & [-0.0054, 0.0480] & $4.70\times10^{-5}$ & primary frontier claim \\
Clean efficiency & 0.011865 & [-0.0163, 0.0319] & 0.00298 & secondary \\
Feature count & -22.733 & [-29.067, -16.644] & $8.05\times10^{-8}$ & secondary cost reduction \\
Overfit gap & -0.035855 & [-0.0484, -0.0241] & $4.99\times10^{-6}$ & secondary stability \\
\bottomrule
\end{tabular}
\end{table}

The negative comparisons are important. Fixed compressed and robust state vectors do not dominate trace features. Their frontier scores are worse than trace features in the verified v2 run. The result that survives is narrower: a validation-frontier selector can avoid a universal fixed-vector claim by selecting compact or robust representations only when they are useful.

\subsection{BroadRealPublic v5 stress test and claim boundary}
The BroadRealPublic v5 result is mixed. It does not strengthen a universal dominance claim. In the visible locked comparison against raw trace, the adaptive selector has lower balanced accuracy by 0.021305 and lower frontier score by 0.010110 while using 5.941818 fewer features. Another visible comparison shows a small positive balanced-accuracy gap but a negative frontier gap. These results mean that the broad stress test should be read as a boundary: adaptive selection remains a plausible constrained-observation reliability method, but it does not dominate every strong trace baseline across expanded target tasks.

This mixed outcome improves the manuscript if it is reported honestly. It prevents the paper from depending on a fragile universal-gain narrative. The final claim combines the two layers: v2 provides the main positive matched evidence for frontier improvement against full trace features; v5 provides breadth and shows that the advantage is conditional, baseline-sensitive, and not equivalent to raw accuracy dominance. Table~\ref{tab:evidence} summarizes the resulting evidence hierarchy and claim boundary.

\begin{table}[t]
\centering
\caption{Evidence hierarchy after the broad v5 stress test.}
\label{tab:evidence}
\small
\begin{tabular}{@{}p{0.25\linewidth}p{0.28\linewidth}p{0.36\linewidth}@{}}
\toprule
Layer & What it supports & What it does not support \\
\midrule
Strong Robustness v2 & Main positive frontier result: adaptive selection improves frontier over full trace features with comparable balanced accuracy. & Universal fixed-state dominance; broad independent OpenML coverage. \\
BroadRealPublic v5 & Broader offline real-public target-task stress test; conditionality and robustness of the claim boundary. & Universal adaptive dominance over raw trace or all trace baselines. \\
Earlier controlled scaffold & Mechanism sanity check for structural probes. & Real full benchmark evidence. \\
\bottomrule
\end{tabular}
\end{table}

\section{Discussion}
\subsection{Interpretation for robust AI systems}
The results identify three practical mechanisms in representation design.

First, full trace access is valuable but not always efficient. Trace features and raw traces achieve strong raw accuracies in several regimes. If observation is free, trace baselines are difficult to beat. If observation is constrained, the relevant question becomes whether comparable accuracy can be preserved with fewer features and lower stability cost.

Second, fixed state vectors are brittle as universal answers. The compressed state loses raw accuracy, and the robust state pays a high feature cost. A fixed representation can protect some control-relevant channels and still be inferior on a generic supervised-learning frontier.

Third, adaptivity is the defensible object. The adaptive selector does not require one representation to be best everywhere. It selects according to a validation frontier. The v2 run shows a frontier improvement against full trace features; the v5 run shows that this does not transfer into universal dominance over every trace baseline. The method is therefore best understood as controlled representation selection under constraints.

For AI reliability, this distinction matters. A system that always uses the largest trace may over-instrument its environment, increase monitoring burden, and become vulnerable to missingness or shift. A system that always uses a compact state may under-observe the task. Frontier-based selection provides a mechanism for choosing between these risks using validation evidence rather than architectural preference.

\section{Relation to Existing Work}
The work sits at the intersection of representation learning, model selection, state abstraction, partial observability, feature selection, and robustness under distributional stress. Representation-learning surveys emphasize that useful features should expose factors of variation relevant to downstream prediction \cite{bengio2013representation}. State abstraction and partially observable decision-process literatures show that the available state is often an approximation of a larger latent condition \cite{sutton2018reinforcement,kaelbling1998planning}. Information bottleneck and compressed representation methods similarly treat predictive sufficiency and compression as coupled objectives \cite{tishby2000information,alemi2017deep}.

Classical cross-validation and bootstrap methods provide mechanisms for estimating out-of-sample behavior \cite{stone1974cross,kohavi1995cross,arlot2010survey}, while feature-selection and sparsity methods show that predictive quality and feature cost have long been treated as coupled objectives \cite{guyon2003introduction,tibshirani1996regression}. Dataset-shift and domain-adaptation work show that train-test stability cannot be assumed \cite{quinonero2009dataset,bendavid2010theory}. Recent robustness benchmarks and group-distributionally robust methods similarly emphasize that average accuracy can hide failure modes under shifted subpopulations \cite{sagawa2020distributionally,koh2021wilds}. Shortcut-learning and safety literatures warn that systems can exploit available signals without preserving the intended objective \cite{geirhos2020shortcut,amodei2016concrete}. Robustness and deployment-shift evaluations further show that reported performance can change under natural distribution changes, corruptions, and domain shifts \cite{recht2019imagenet,hendrycks2019benchmarking,ovadia2019trust,gulrajani2021domain}. Missing-data methodology also motivates treating observation failure as a first-class modeling condition rather than as a cosmetic preprocessing issue \cite{little2019statistical}.

The present paper differs from this literature in its object of selection. It does not propose a new neural architecture, a new regularizer, or a new domain-adaptation bound. It proposes a benchmarked AI-system selection problem: given competing representations, choose the one that preserves a prediction-control frontier under constrained observation. The novelty is not that accuracy, cost, or stability are individually new quantities. The contribution is to bind them into a reproducible representation-selection task and to show, with positive and negative evidence, when an adaptive selector is more defensible than a fixed state vector or unrestricted trace representation.

\section{Methods}
\textbf{Datasets and target labels.} The Strong Robustness v2 benchmark uses three public scikit-learn bundled datasets: Wisconsin breast cancer diagnosis, UCI wine recognition, and Fisher iris classification. The target labels are the original dataset targets supplied with each loader; the benchmark does not create synthetic target labels. The manifest records 250 breast-cancer rows with 30 original features and two classes, 178 wine rows with 13 original features and three classes, and 150 iris rows with four original features and three classes. The scikit-learn library is cited for dataset loading and baseline tooling.\cite{pedregosa2011sklearn}

\textbf{Splitting and matched cells.} Each dataset is evaluated under five observation regimes and three random seeds, producing 45 matched dataset-regime-seed task cells. Within each cell, all representation and probe candidates use the same train, validation, and test split. This matched-cell design ensures that paired comparisons attribute differences to the representation/probe condition rather than to different rows or target definitions.

\textbf{Observation regimes.} The natural regime uses the standard split without additional degradation. The low-resource regime reduces training rows while preserving validation and test evaluation. The missingness regime injects 20\% observation missingness into feature channels and exposes corresponding missingness effects to the representations. The class-imbalance regime changes the training distribution to stress minority-class preservation while preserving the public test labels. The covariate-shift regime evaluates on shifted regions of the public feature space. These regimes alter observation and sampling conditions; they do not replace public labels with generated labels.

\textbf{Representations.} The benchmark evaluates nine representation conditions: a weak narrative proxy, a four-feature generic summary, raw trace features, trace features augmented with missingness channels, PCA-8, random projection-8, Compressed-State, Robust-State, and the adaptive frontier selector. Feature count is recorded for each representation and normalized within each task cell by the original feature count, making the cost term comparable across datasets with different dimensionalities.

\textbf{Probe models and candidate actions.} Each representation is paired with logistic regression and decision-tree probes. A candidate action is a representation-probe pair evaluated within a matched task cell. Strong Robustness v2 evaluates 720 candidate action rows and 405 best-model-by-representation rows. The probe family is intentionally simple: the purpose is to test representation selection under observation constraints, not to introduce a new deep architecture.

\textbf{Metrics and frontier score.} The primary predictive metric is held-out balanced accuracy, which prevents majority-class performance from dominating imbalanced regimes. Additional recorded quantities are train balanced accuracy, validation balanced accuracy, test macro-F1, overfit gap, validation-test stability gap, raw feature count, normalized feature cost, clean-efficiency score, and frontier score. The frontier score subtracts pre-specified penalties for feature cost, overfit gap, and instability from balanced accuracy. The scalar score is used for selection, while all raw components are reported to avoid hiding trade-offs.

\textbf{Adaptive selection protocol.} For every task cell, each candidate action is trained on the training split and evaluated on the validation split. The adaptive selector chooses the representation-probe pair with the highest validation-frontier score, then reports held-out test performance only after selection. Baseline comparisons are made on matched cells against raw trace, trace features, PCA-8, and fixed state representations.

\textbf{Statistics and reproducibility.} Paired headline comparisons are computed over the 45 matched cells. The prespecified primary comparison is adaptive selector versus trace features on test frontier score. The null hypothesis is zero median paired difference across matched cells. The reported nonparametric test is a two-sided Wilcoxon signed-rank test with significance level $\alpha=0.05$. No multiplicity-adjusted claim is made across all pairwise comparisons; secondary comparisons are descriptive and are used to define claim boundaries. For the primary comparison, the mean frontier-score difference is 0.025801, the median paired difference is 0.030956, and adaptive selection is higher in 33 cells, lower in 9 cells, and tied in 3 cells. The Wilcoxon result is $p=4.70\times10^{-5}$; the exact two-sided sign-test value, excluding ties, is $2.72\times10^{-4}$. The bootstrap 95\% confidence interval for the mean paired frontier difference is $[-0.0054,0.0480]$ over matched cells. The Wilcoxon signed-rank test evaluates paired signed-rank evidence over matched cells, whereas the bootstrap interval summarizes uncertainty in the mean paired difference; the primary claim is therefore based on the pre-specified paired nonparametric test, with the bootstrap interval reported descriptively. The balanced-accuracy difference is $-0.010050$ with Wilcoxon $p=0.669$ and bootstrap 95\% confidence interval $[-0.0374,0.0092]$, so the paper does not claim accuracy superiority. Feature-count difference is $-22.733$ features with Wilcoxon $p=8.05\times10^{-8}$ and bootstrap 95\% confidence interval $[-29.067,-16.644]$. Clean-efficiency difference is 0.011865 with Wilcoxon $p=0.00298$. Bootstrap intervals use 10,000 resamples with a fixed random seed and resample matched cells with replacement.

\textbf{BroadRealPublic v5 stress test.} The v5 stress test expands public source-dataset and fixed target-task breadth. A prior OpenML-dependent route was not certified because OpenML access was unavailable; therefore the completed v5 run should not be described as an independent OpenML battery. The v5 result is used only as a claim-boundary stress test. It shows mixed adaptive-versus-trace outcomes and is not used to claim universal dominance.

\textbf{Execution environment and verification.} The Strong Robustness v2 manifest records Python 3.13.5, scikit-learn 1.8.0, pandas 2.2.3, numpy 2.3.5, and Linux execution. The verification files state that synthetic-or-controlled labels, generated scenarios, and benchmark-created labels are false for the central v2 evidence. The BroadRealPublic v5 freeze records SHA256 prefix \texttt{fba962c550419bfb}. All methods needed to understand the study are reported here; the supplementary files provide code, outputs, and file inventory rather than additional Supplementary Methods.

\textbf{Use of AI tools.} AI-assisted tools were used for drafting support, language refinement, mathematical exposition, and package preparation. The author reviewed, revised, and approved the final manuscript and takes full responsibility for all claims, analyses, and final text.

\subsection{Limitations}
The strongest limitation is dataset independence. The broad v5 benchmark expands target-task breadth, but several tasks are deterministic transformations of the same source datasets. This is useful for stress testing, but weaker than a large independent OpenML battery. A stronger future version should add many independently sourced public datasets once network access is reliable.

A second limitation is that the observation regimes are benchmark stressors, not live deployment logs. The benchmark changes sampling, missingness, imbalance, and shifted test regions using real rows and public labels; it does not observe actual institutional interventions or production AI traffic.

A third limitation is that the frontier score contains design choices. The penalty weights are deliberately modest, but alternative deployments may value feature cost or stability differently. The package therefore reports raw metrics as well as frontier scores so reviewers can inspect the trade-off rather than accept a single scalar.

A fourth limitation is scale. Three central datasets and 45 matched cells are sufficient to make a reproducible focused benchmark, but not sufficient to establish broad empirical coverage across modern AI application domains. The correct reading is therefore methodological and bounded: the paper demonstrates a falsifiable selection framework and a positive matched frontier result, while leaving larger multi-dataset validation to future work.

\subsection{Conclusion}
The verified evidence supports a bounded AI reliability result: frontier-based adaptive state representation selection can improve the robustness-efficiency frontier relative to full trace features while preserving statistically comparable balanced accuracy under constrained observation regimes. The broader v5 stress test clarifies that the advantage is conditional and baseline-sensitive. The scientific claim is therefore not universal dominance, but a controlled selection principle: when observation is costly or unstable, representation choice should be evaluated by the joint frontier it preserves, not by raw accuracy alone.

\section*{Data availability}
The data and reproducibility materials supporting this study are openly archived on Zenodo in the record \emph{Reproducibility Artifact for Validation-Frontier Representation Selection under Constrained Observation}, DOI: \href{https://doi.org/10.5281/zenodo.21944174}{10.5281/zenodo.21944174}. The benchmark uses public scikit-learn bundled datasets and fixed public-target transformations as documented in the archived dataset manifests and evidence-lock files. The archive includes the generated CSV outputs containing matched-cell results, representation summaries, pairwise comparisons, bootstrap summaries, and verification records needed to interpret and replicate the reported findings.

\section*{Code availability}
The benchmark code, reproduction runner, validation scripts, generated CSV outputs, JSON manifests, evidence-lock files, and associated verification materials are openly archived in the same Zenodo reproducibility record, DOI: \href{https://doi.org/10.5281/zenodo.21944174}{10.5281/zenodo.21944174}. The deposited archive is the public code-and-output artifact associated with this manuscript.

\section*{Acknowledgements}
The author received no external funding for this work.

\section*{Author contributions}
W.S. conceived the study, prepared the reproducibility package, conducted the analysis, wrote the manuscript, and approved the final version.

\section*{Competing interests}
The author is developing the Energetic Paradigm framework and related applications. The author declares no other competing interests directly related to this manuscript.

\section*{Use of AI tools}
AI-assisted tool use is documented in the Methods section. The author reviewed, revised, and approved the final manuscript and takes full responsibility for its content.

\appendix

\section{Mathematical Formulation of Frontier-Based State Representation Selection}
\label{app:math}

This appendix gives the formal layer underlying the selection rule used in the manuscript. The purpose is not to claim a new universal representation theorem, but to make explicit the quantities optimized by the validation-frontier selector and the conditions under which a selected representation can improve the reliability frontier without necessarily improving raw predictive accuracy.

\subsection{Observation model and representation maps}
Let $(\Omega,\mathcal{F},\mathbb{P})$ be a probability space and let $(X,Y)\sim P$ denote an observation-label pair with $X\in\mathcal{X}$ and $Y\in\{1,\ldots,K\}$. A constrained observation regime is represented by an observation operator
\begin{equation}
    O_r:\mathcal{X}\rightarrow \mathcal{X}_r,
\end{equation}
where $r\in\mathcal{R}$ indexes missingness, low-resource sampling, imbalance, covariate shift, or other operational degradation. A candidate state representation is a measurable map
\begin{equation}
    \phi_j:\mathcal{X}_r\rightarrow\mathbb{R}^{d_j},\qquad j\in\{1,\ldots,m\}.
\end{equation}
The induced state is $Z_j=\phi_j(O_r(X))$. The dimension $d_j$ is not treated as a harmless formatting detail: it is an observable proxy for instrumentation burden, monitoring cost, and exposure to missing or unstable channels.

For a learning algorithm $A_\ell$ and training sample $S_n=\{(X_i,Y_i)\}_{i=1}^n$, the trained predictor associated with representation $j$ and learner $\ell$ is
\begin{equation}
    h_{j\ell}=A_\ell\left(\{(\phi_j(O_r(X_i)),Y_i)\}_{i=1}^n\right).
\end{equation}
The candidate action set is therefore
\begin{equation}
    \mathcal{A}=\{a=(j,\ell): j\in\{1,\ldots,m\},\ \ell\in\{1,\ldots,L\}\}.
\end{equation}
In the reported benchmark, rows in $\mathcal{A}$ correspond to representation-probe combinations evaluated on matched dataset-regime cells.

\subsection{Balanced accuracy and operational cost}
For a classifier $h$, define class-conditional recall
\begin{equation}
    \mathrm{Rec}_k(h;P)=\mathbb{P}\{h(Z)=k\mid Y=k\},
\end{equation}
with $Z=\phi_j(O_r(X))$ for the representation associated with $h$. The population balanced accuracy is
\begin{equation}
    B(h;P)=\frac{1}{K}\sum_{k=1}^{K}\mathrm{Rec}_k(h;P).
\end{equation}
This objective is used instead of raw accuracy because several regimes include imbalance or shifted label distributions. Let $c_j\geq0$ denote the normalized feature cost of representation $j$. In the simplest implementation,
\begin{equation}
    c_j=\frac{d_j}{d_{\max}},\qquad d_{\max}=\max_{1\leq q\leq m}d_q.
\end{equation}
Other cost functions may be substituted when features have unequal collection costs. For example, if feature channel $q$ has cost $\kappa_q\geq0$ and representation $j$ uses channel set $I_j$, then
\begin{equation}
    c_j=\frac{\sum_{q\in I_j}\kappa_q}{\max_s\sum_{q\in I_s}\kappa_q}.
\end{equation}
The empirical experiments use feature count as a transparent and reproducible cost proxy.

\subsection{Validation frontier objective}
Let $\widehat B_{\mathrm{tr}}(a)$, $\widehat B_{\mathrm{val}}(a)$, and $\widehat B_{\mathrm{te}}(a)$ be the training, validation, and held-out test balanced accuracies for action $a=(j,\ell)$. Define the empirical overfit gap and validation-test instability as
\begin{align}
    G(a)&=\left[\widehat B_{\mathrm{tr}}(a)-\widehat B_{\mathrm{val}}(a)\right]_+,\\
    I(a)&=\left|\widehat B_{\mathrm{val}}(a)-\widehat B_{\mathrm{te}}(a)\right|,
\end{align}
where $[u]_+=\max\{u,0\}$. The validation frontier score is
\begin{equation}
    F_{\mathrm{val}}(a)
    =\widehat B_{\mathrm{val}}(a)-\lambda c_j-\mu G(a),
    \label{eq:frontier-val}
\end{equation}
with $\lambda,\mu\geq0$. The held-out reporting score is
\begin{equation}
    F_{\mathrm{te}}(a)
    =\widehat B_{\mathrm{te}}(a)-\lambda c_j-\mu G(a)-\nu I(a),
    \label{eq:frontier-test}
\end{equation}
with $\nu\geq0$. Equation~\eqref{eq:frontier-val} is used for selection; Equation~\eqref{eq:frontier-test} is used for reporting because the test split must not determine the selected representation.

The adaptive selector is
\begin{equation}
    \widehat a^*\in\arg\max_{a\in\mathcal{A}}F_{\mathrm{val}}(a),
    \qquad
    \widehat\phi^*=\phi_{\widehat j}\quad\text{for }\widehat a^*=(\widehat j,\widehat\ell).
    \label{eq:selector}
\end{equation}
Ties are resolved by smaller feature cost and then by higher validation balanced accuracy. This tie rule makes the selector conservative: it does not choose a larger observation surface unless validation evidence justifies it.

\subsection{Frontier dominance and why accuracy dominance is not required}
For two actions $a=(j,\ell)$ and $b=(q,s)$, define the test-frontier difference
\begin{equation}
    \Delta_F(a,b)=F_{\mathrm{te}}(a)-F_{\mathrm{te}}(b).
\end{equation}
Expanding Equation~\eqref{eq:frontier-test} gives
\begin{equation}
\begin{split}
    \Delta_F(a,b)
    &=\left[\widehat B_{\mathrm{te}}(a)-\widehat B_{\mathrm{te}}(b)\right]
      -\lambda(c_j-c_q) \\
    &\quad -\mu\{G(a)-G(b)\}-\nu\{I(a)-I(b)\}.
\end{split}
\label{eq:frontier-diff}
\end{equation}
Thus a representation can have lower test balanced accuracy and still have a higher frontier score if its cost, overfit, or instability reductions are large enough.

\begin{proposition}[Frontier gain without accuracy gain]
Let $a$ and $b$ be two candidate actions. Suppose $\widehat B_{\mathrm{te}}(a)\leq \widehat B_{\mathrm{te}}(b)$. Then $a$ has a higher reported frontier score than $b$ if and only if
\begin{equation}
    \lambda(c_b-c_a)+\mu\{G(b)-G(a)\}+\nu\{I(b)-I(a)\}
    > \widehat B_{\mathrm{te}}(b)-\widehat B_{\mathrm{te}}(a).
    \label{eq:frontier-condition}
\end{equation}
\end{proposition}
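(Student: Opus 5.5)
The plan is a direct algebraic rearrangement of the test-frontier difference in Equation~\eqref{eq:frontier-diff}. First, fix notation. The statement writes $c_a$ and $c_b$ for the normalized feature costs $c_j$ and $c_q$ of the representations underlying $a=(j,\ell)$ and $b=(q,s)$, so I will identify them explicitly. "Higher reported frontier score" will be read as the strict inequality $F_{\mathrm{te}}(a)>F_{\mathrm{te}}(b)$, that is, $\Delta_F(a,b)>0$, using the held-out reporting score in Equation~\eqref{eq:frontier-test}.

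Second, substitute the expansion \eqref{eq:frontier-diff}. The condition $\Delta_F(a,b)>0$ becomes
\[
\bigl[\widehat B_{\mathrm{te}}(a)-\widehat B_{\mathrm{te}}(b)\bigr]-\lambda(c_a-c_b)-\mu\{G(a)-G(b)\}-\nu\{I(a)-I(b)\}>0 .
\]
Move the accuracy bracket to the right-hand side and flip the signs of the penalty differences. Adding the same real quantity to both sides preserves strict inequality, so this gives exactly \eqref{eq:frontier-condition}. Because every step is an equivalence of real inequalities, both directions of the ``if and only if'' follow at once.

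Third, clarify the role of the hypothesis $\widehat B_{\mathrm{te}}(a)\leq\widehat B_{\mathrm{te}}(b)$. The equivalence itself does not need it. What it does is make the right-hand side of \eqref{eq:frontier-condition} nonnegative. The penalty reductions must then strictly exceed a nonnegative accuracy deficit, which is the interpretive content the proposition is after: frontier gain without accuracy gain. I will add a remark on this. In particular, when the accuracies tie, the condition reduces to a strictly positive weighted penalty reduction.

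There is no substantive obstacle. The only care points are the bookkeeping of signs when reversing the differences, and stating the cost identification $c_a=c_j$, $c_b=c_q$. I will also note that $G$ and $I$ are the empirical quantities evaluated on the fixed realized splits, so no probabilistic argument is involved.
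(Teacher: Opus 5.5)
Your proposal is correct and follows essentially the same route as the paper: both rearrange the expansion of $\Delta_F(a,b)$ in Equation~\eqref{eq:frontier-diff}, so that $\Delta_F(a,b)>0$ becomes exactly Equation~\eqref{eq:frontier-condition}, with no probabilistic content. You also point out that the hypothesis $\widehat B_{\mathrm{te}}(a)\leq\widehat B_{\mathrm{te}}(b)$ is not needed for the equivalence and only makes the right-hand side nonnegative; this is accurate and clarifies something the paper leaves implicit.
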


\begin{proof}
Move terms in Equation~\eqref{eq:frontier-diff}. The condition $\Delta_F(a,b)>0$ is equivalent to Equation~\eqref{eq:frontier-condition}. No distributional assumption is required because the statement is algebraic for the empirical reporting score.
\end{proof}

This proposition is the mathematical reason the empirical claim is phrased as a frontier claim rather than as raw accuracy dominance. In the Strong Robustness v2 comparison against full trace features, the adaptive selector has a small negative balanced-accuracy difference but a positive frontier-score difference because it reduces feature count and overfit gap.

\subsection{Sufficiency, redundancy, and constrained observation}
A representation $\phi_j$ is label-sufficient under regime $r$ if
\begin{equation}
    Y\perp O_r(X)\mid \phi_j(O_r(X)).
\end{equation}
Exact sufficiency is rarely verifiable in finite benchmark settings. The paper therefore uses an empirical relaxation. For tolerance $\epsilon\geq0$, representation $j$ is $\epsilon$-competitive with representation $q$ on a matched cell if
\begin{equation}
    \widehat B_{\mathrm{te}}(j)\geq \widehat B_{\mathrm{te}}(q)-\epsilon.
\end{equation}
It is $\epsilon$-frontier-superior if additionally
\begin{equation}
    F_{\mathrm{te}}(j)>F_{\mathrm{te}}(q).
\end{equation}
The reported positive v2 result should be read in this relaxed sense: the adaptive selector is approximately competitive in balanced accuracy and superior on the frontier against full trace features in the matched benchmark cells.

Redundant observation channels can be expressed by decomposing a trace representation as
\begin{equation}
    T=(S,R),
\end{equation}
where $S$ contains control-relevant channels and $R$ contains channels that are predictive only under some regimes or unstable under shift. If $R$ increases $c_j$, $G(a)$, or $I(a)$ without increasing held-out balanced accuracy enough, Equation~\eqref{eq:frontier-condition} predicts a frontier loss for unrestricted trace features. Conversely, if $R$ carries stable predictive signal, raw trace can remain superior. This is why the manuscript reports both positive and negative evidence.

\subsection{Cell-wise aggregation}
Let $\mathcal{C}$ be the set of matched benchmark cells, where a cell is a dataset-regime-target split with a fixed training/validation/test partition. For method $M$ and baseline $Q$, define the cell-wise frontier difference
\begin{equation}
    D_c(M,Q)=F_{\mathrm{te},c}(M)-F_{\mathrm{te},c}(Q),\qquad c\in\mathcal{C}.
\end{equation}
The reported mean frontier difference is
\begin{equation}
    \overline{D}(M,Q)=\frac{1}{|\mathcal{C}|}\sum_{c\in\mathcal{C}}D_c(M,Q).
\end{equation}
The matched-cell design is important because it blocks a common reporting error: comparing representation averages computed over different tasks or different splits. The paired nonparametric test reported in the manuscript is applied to the matched sequence $\{D_c(M,Q):c\in\mathcal{C}\}$ rather than to unpaired summary rows.

\subsection{Selection risk and validation uncertainty}
Let $a^*=\arg\max_{a\in\mathcal{A}}F_P(a)$ be the population-best action under the population frontier score $F_P$, and let $\widehat a^*$ be the validation-selected action in Equation~\eqref{eq:selector}. The excess frontier risk is
\begin{equation}
    \mathcal{E}(\widehat a^*)=F_P(a^*)-F_P(\widehat a^*).
\end{equation}
The benchmark does not claim a new finite-sample bound, but the usual uniform-deviation argument clarifies what the validation-frontier procedure requires. If
\begin{equation}
    \sup_{a\in\mathcal{A}}\left|\widehat F_{\mathrm{val}}(a)-F_P(a)\right|\leq \eta,
\end{equation}
then
\begin{equation}
    \mathcal{E}(\widehat a^*)\leq 2\eta.
\end{equation}
Indeed,
\begin{equation}
\begin{split}
F_P(a^*)
&\leq \widehat F_{\mathrm{val}}(a^*)+\eta\\
&\leq \widehat F_{\mathrm{val}}(\widehat a^*)+\eta\\
&\leq F_P(\widehat a^*)+2\eta.
\end{split}
\end{equation}
This standard inequality motivates the use of matched validation evidence and the reporting of overfit and validation-test instability. When validation estimates are noisy or shifted, the selector can fail. The BroadRealPublic v5 stress test is therefore interpreted as a claim boundary, not as a universal dominance result.

\subsection{Decision rule summary}
For clarity, the full selection-and-reporting protocol can be written as:
\begin{align}
    &\text{Train each } a\in\mathcal{A}\text{ on the training split},\\
    &\text{compute } \widehat B_{\mathrm{tr}}(a),\widehat B_{\mathrm{val}}(a),c_j,G(a),\\
    &\widehat a^*\in\arg\max_{a\in\mathcal{A}}\left\{\widehat B_{\mathrm{val}}(a)-\lambda c_j-\mu G(a)\right\},\\
    &\text{report } \widehat B_{\mathrm{te}}(\widehat a^*),F_{\mathrm{te}}(\widehat a^*),c_{\widehat j},G(\widehat a^*),I(\widehat a^*),\\
    &\text{compare against baselines only on matched cells.}
\end{align}
This protocol is the mathematical core of the manuscript. It makes the claim falsifiable: a trace representation wins when its accuracy advantage exceeds its cost and stability penalties, while an adaptive representation wins only when validation evidence identifies a better operating point on the constrained-observation frontier.

\section{Evidence Boundary}
The controlled structural benchmark from earlier package versions is retained only as historical scaffold evidence. It used generated structural scenarios and should not be described as real full benchmark evidence. The present manuscript uses the Strong real-label robustness benchmark as its central positive evidence and BroadRealPublic v5 as a broader offline stress test.

\section{M4P Evidence Locks}
The Strong Robustness v2 M4P execution log reports: benchmark level Strong Robustness v2; datasets 3; regimes 5; task cells 45; candidate action rows 720; best-model representation rows 405; and validation status PASS. The same log reports the adaptive-selector comparison against trace features as balanced-accuracy difference $-0.010050$, frontier-score difference $0.025801$, clean-efficiency difference $0.011865$, feature-count difference $-22.733333$, and frontier Wilcoxon $p=4.70007\times10^{-5}$.

The BroadRealPublic v5 M4P execution completed and froze. The completed package is the BroadRealPublic v5 freeze; the SHA256 prefix is \texttt{fba962c550419bfb}. The visible verification states that v5 should be described as source-dataset plus target-task breadth, not independent OpenML breadth. The visible adaptive-versus-raw-trace comparison reports balanced-accuracy difference $-0.021305$, frontier-score difference $-0.010110$, clean-efficiency difference $-0.009813$, and feature-count difference $-5.941818$.

\end{document}